\crefname{equation}{equation}{equations}
\crefname{figure}{figure}{figures}
\crefname{table}{table}{tables}
\crefname{algorithm}{Algorithm}{Algorithms}
\crefname{section}{}{}
\crefname{subsection}{}{}
\crefname{appendix}{Appendix}{Appendixes}
\crefname{dammy}{}{}
\crefname{definition}{Definition}{Definitions}
\crefname{proposition}{Proposition}{Propositions}
\crefname{lemma}{Lemma}{Lemmas}
\crefname{theorem}{Theorem}{Theorems}
\crefname{corollary}{Corollary}{Corollaries}
\crefname{remark}{Remark}{Remarks}
\crefname{example}{Example}{Examples}
\theoremstyle{definition}
\newtheorem{dammy}{Dammy}[section]
\newtheorem{definition}[dammy]{Definition}
\newtheorem{proposition}[dammy]{Proposition}
\newtheorem{lemma}[dammy]{Lemma}
\newtheorem{theorem}[dammy]{Theorem}
\newtheorem{remark}[dammy]{Remark}
\newtheorem{example}[dammy]{Example}
\numberwithin{equation}{section}
\DeclareMathOperator*{\argmin}{arg\,min} 
\DeclareMathOperator*{\argmax}{arg\,max} 
\DeclareMathOperator{\Id}{Id}
\DeclareMathOperator{\Tr}{Tr}
\DeclareMathOperator{\Proj}{Proj}
\title{A proof of convergence \\ of inverse reinforcement learning \\ for multi-objective optimization}
\author{%
    Akira Kitaoka \\
    NEC Corporation \\
    \texttt{akira-kitaoka@nec.com}
    \And
    Riki Eto \\
    NEC Corporation \\
    \texttt{riki.eto@nec.com} \\
}
\begin{document}

\maketitle

\begin{abstract}
    We show the convergence of Wasserstein inverse reinforcement learning for multi-objective optimizations with the projective subgradient method by formulating an inverse problem of the multi-objective optimization problem.

    In addition, we prove convergence of inverse reinforcement learning (maximum entropy inverse reinforcement learning, guided cost learning) with gradient descent and the projective subgradient method.
\end{abstract}


\section{Introduction}

Artificial intelligence (AI) has been used to automate various tasks recently. Generally,  automation by AI is achieved by setting an index of goodness or badness (reward function) of a target task and having AI automatically search for a decision, that is, an optimal solution in mathematical optimization that maximizes or minimizes the index. 
For example, in work shift scheduling (e.g. \cites{Cheang-2003-nurse,Graham-1979-optimization}), which is a type of combinatorial optimization or multi-objective optimization, we can create shifts that reflect our viewpoints by calculating the optimal solution of a reward function that reflects our intentions for several viewpoints, such as ``degree of reflection of vacation requests,'' ``leveling of workload,'' and ``personnel training,'' and so on while preserving the required number of workers, required skills, labor rules.
However, setting the reward function, i.e., "what is optimal?", manually requires a lot of trial-and-error, which is a challenge for the actual application of mathematical optimization.
Creating a system that can solve this problem automatically is essential in freeing the user from manually designing the reward function.

Inverse reinforcement learning (IRL) 
\cites{Russell-1998-learning,Ng-2000-algorithms} is generally known as facilitating the setting of the reward function. In IRL, a reward function that reflects expert's intention is generated by learning expert's trajectories, iterating optimization using the reward function, and updating the parameters of the reward function.
In IRLs which is fomulated by Ng and Russell \cite{Ng-2000-algorithms}, and Abbeel and Ng \cite{Abbeel-2004-apprenticeship}, in multi-objective optimization, the space of actions, i.e., the space of optimization results, is enormous.
In other words, it is necessary to set the reward function for the space of actions and states, which is computationally expensive.

Maximum entropy IRL (MEIRL) \cite{Ziebart-2008-maximum}
and 
guided cost learning (GCL) \cite{Finn-2016-guided}
are 
methods to adapt IRL to multi-objective optimization problems.
However, these methods have their issues.
For example, MEIRL requires the sum of the reward functions for all trajectories to be computed. This makes maximum entropy IRL computationally expensive. On the other hand, GCL approximates the sum of the reward functions for all trajectories by importance sampling. However, since multi-objective optimization problems take discrete values, it is difficult to find the probability distribution corresponding to a given value when a specific value is input. One reason for this difficulty is that in multi-objective optimization problems, even a small change in the value of the reward function may result in a large change in the result.

Eto proposed IRL for multi-objective optimization, Wasserstein inverse reinforcement learning (WIRL) \cite{Eto-2022}, inspired by Wasserstein generative adversarial networks (WGAN) \cite{Arjovsky-Chintala-Bottou-17}. Experiments have confirmed that WIRL can stably perform inverse reinforcement learning in multi-objective optimization problems.

However, there was no theoretical way to guarantee that WIRL would converge. 
Eto proposed that we do inverse reinforcement learning for multi-objective optimizations with WIRL \cite{Eto-2022}, although there was no way to confirm this theoretically.
Also, there exists an example in which WGAN does not converge \cite{Mescheder-2018-which}. Therefore, WIRL is not expected to converge in general.

The inverse reinforcement learning algorithm formulated by Ng and Russell \cite{Ng-2000-algorithms} is guaranteed to converge to the optimal solution because of the linear programming.
The inverse reinforcement learning algorithm formulated by Abeell and Ng \cite{Abbeel-2004-apprenticeship} is guaranteed to converge the learner's value function to the expert's value function.
However, the convergence of MEIRL and GCL was not proven.

In this paper, we show the convergence of various inverse reinforcement learning for multi-objective optimizations with the projective subgradient method. 
In particular, We show the convergence of Wasserstein inverse reinforcement learning (WIRL) for multi-objective optimizations with the projective subgradient method by formulating an inverse problem of the optimization problem that is equivalent to WIRL for multi-objective optimizations.
In \cref{sec:WIRL}, we recall the definition of WIRL. 
In \cref{sec:intention_learning_over_inner_product_space}, we adapt the definition of WIRL to multi-objective optimizations. 
In \cref{sec:proof_of_convergence_theorem}, we formulate an inverse problem of an multi-objective optimization problem that is equivalent to the intention learning and show the convergence of WIRL to adapt the inverse problem to the projected subgradient method. 
In \cref{sec:example_of_intention_learning}, we see examples of WIRL applied to linear and quadratic programming.
In \cref{sec:related_work}, with the projected subgradient method,  we show the convergence of MEIRL and GCL.

\section{Wasserstein inverse reinforcement learning}\label{sec:WIRL}

Let $\mathcal{H} , \mathcal{H}_{\mathcal{S}} $ be inner product spaces,
$\mathcal{S} \subset \mathcal{H}_{\mathcal{S}}$ be a space of states
$\mathcal{A} \subset \mathcal{H}$ be a space of actions,
$\mathcal{T} := \prod_k \left( \mathcal{S} \times \mathcal{A} \right)$ be a space of trajectories.
Let $\Theta \subset \mathcal{H}$,
and we call $\Theta$ a space of feature vectors.
Let $\Phi \subset \mathcal{H}$,
and we call $\Phi $ a space of parameters of learner's trajectories.
Let $f_{\bullet } \colon \mathcal{T} \to \Theta$ be 1-Lipschitz,
and we call $f_{\bullet }$ the feature map. 
For any Lipschitz function $r_{\theta} \colon \mathcal{T} \to \mathbb{R}$,
the norm of Lipschitz $\| r_{\theta} \|_L$ is defined by
\[
    \| r_{\theta} \|_L 
    := 
    \sup_{\tau_1 \not = \tau_2 } \frac{ | r_{\theta} (\tau_1) - r_{\theta} (\tau_2) | }{ \| \tau_1 - \tau_2 \| }
    .
\]
Let $\delta_x$ be the Delta function at $x$.
Let ${\{ \tau_{E}^{(n)} \}}_{n=1}^N$ be the data of expert's trajectories,
and we define the distribution of expert's trajectories by
\[
    \mathbb{P}_E := \frac{1}{N} \sum_{n=1}^N \delta_{\tau_{E}^{(n)}}
    .
\]
With the initial state $s_{\mathrm{ini}}^{(n)}$ of expert's trajectories $\tau_{E}^{(n)}$,
and the generator $g_{\bullet} (\bullet ) \colon \Phi \times \mathcal{S} \to \mathcal{T} $, 
we define the distribution of learner's trajectories by
\[
    \mathbb{P}_\phi := \frac{1}{N} \sum_{n=1}^N \delta_{g_{\phi} (s_{\mathrm{ini}}^{(n)})}   
    .
\]
The Wasserstein distance between the distribution $\mathbb{P}_E$ of expert's trajectories and that $\mathbb{P}_\phi$ of learner's trajectories 
is, with the Kantrovich-Rubinstein duality (c.f. \cite{Villani-2009}),
\[
    W ( \mathbb{P}_E , \mathbb{P}_\phi )
    =
    \sup_{\| r_{\theta} \|_L \leq 1 }
    \left\{
        \frac{1}{N} \sum_{n=1}^N r_{\theta} (\tau_{E}^{(n)})
        -
        \frac{1}{N} \sum_{n=1}^N r_{\theta} (g_{\phi} (s_{\mathrm{ini}}^{(n)}))
    \right\}
    ,
\]
where 
$r_{\theta}$ is 1-Lipschitz function.

We are interested in finding $\phi \in \Phi$ satisfying the following problem:
\begin{equation}
    \argmin_{\phi \in \Phi} W ( \mathbb{P}_E , \mathbb{P}_\phi )
    .
    \label{eq:WIRL-origin}
\end{equation}
With 
\begin{equation}\label{eq:sim_1-Lip}
    \left\{ r_{\theta}(\tau) := \theta^{\intercal} f_{\tau} \, \middle| \, \theta \in \Theta \right\}
    \text{ insted of }
    \{ \| r_{\theta} \|_L \leq 1 \}
    ,
\end{equation}
to find $\phi \in \Phi$ satisfying \cref{eq:WIRL-origin}
can be roughly replaced by finding 
\begin{equation}
    \argmin_{\phi \in \Phi} 
    \sup_{ \theta \in \Theta }
    \left\{
        \frac{1}{N} \sum_{n=1}^N \theta^{\intercal} f_{\tau_{E}^{(n)}}
        -
        \frac{1}{N} \sum_{n=1}^N \theta^{\intercal} f_{g_{\phi} (s_{\mathrm{ini}}^{(n)})}
    \right\}
    .
    \label{eq:WIRL-origin-2}
\end{equation}
By changing the sign,
we may consider solving
\begin{equation}
    \argmax_{\phi \in \Phi} 
    \inf_{ \theta \in \Theta }
    \left\{
        \frac{1}{N} \sum_{n=1}^N \theta^{\intercal} f_{g_{\phi} (s_{\mathrm{ini}}^{(n)})}
        -
        \frac{1}{N} \sum_{n=1}^N \theta^{\intercal} f_{\tau_{E}^{(n)}}
    \right\}
    .
    \label{eq:WIRL-origin-3}
\end{equation}
The IRL that solves \cref{eq:WIRL-origin-2} or \cref{eq:WIRL-origin-3},
is called Wasserstein inverse reinforcement learning (WIRL) \cite{Eto-2022}.

\begin{remark}
    In this paper, learning to maximize the reward function of a history-dependent policy is called reinforcement learning.
    Learning that minimizes the score between the reward function calculated from the expert's trajectory and the reward function learned by reinforcement learning is called inverse reinforcement learning.
\end{remark}

\section{WIRL for multi-objective optimization}
\label{sec:intention_learning_over_inner_product_space}

We adapt WIRL to multi-objective optimization (e.g. linear programming, quadratic programming).
Let $\phi \in \Phi$,
$\mathcal{H}^{\prime}$ be an inner product space,
$\mathcal{A}^{\prime}$ be a set such that $\mathcal{A}^{\prime} \subset \mathcal{H}^{\prime}$,
$h \colon \mathcal{A}^{\prime} \to \mathcal{H} $ be a continuous function.
Let $X(s)$ be a compact set\footnote{
If $\mathcal{A}^{\prime}$ is in the Euclid space,
compact sets are bounded closed sets.   
}
in $\mathcal{A}^{\prime}$ for $s \in \mathcal{S}$. 
We set the space of trajectories $\mathcal{T} = \mathcal{S} \times \mathcal{A}$.
Then, multi-objective optimization (e.g. \cites{Murata-1996-multi,Gunantara-2018-review}) is to solve for the following optimization:
\begin{equation}\label{eq:optimal_solver}
    a (\phi , s ) \in \argmax_{h(x) \in h ( X (s))} \phi^{\intercal} h ( x )
    .
\end{equation}
We call the solution or the learner's action $a (\phi , s ) \in \mathcal{A}$ the solver.

We set the feature map $f= \Proj_{\mathcal{A}}$,
where $\Proj_{\mathcal{A}} \colon \mathcal{T} \to \mathcal{A}$ is the projection from $\mathcal{T}$ to $\mathcal{A}$. 
We define the generator $g_{\phi} (s)$ by the following optimization problem:
\begin{equation*}
    g_{\phi} (s) := (s , a (\phi , s ))
    .
\end{equation*}
We say that
\textbf{intention learning} of WIRL
is 
WIRL in the above setting.

The expert's action $a^{(n)}$ is assumed to follow an optimal solution.
Namely, we often run WIRL intention learning by assuming that there exists some $\phi_0 \in \Phi $ and that for any $n$ we can write $a^{(n)} = a (\phi_0 , s^{(n)} )$.

\section{A proof of convergence theorem of intention learning with the projected subgradient method}
\label{sec:proof_of_convergence_theorem}

We explain the projected subgradient method according to \cite{Boyd-2003}.
\begin{definition}\label{defi:projected_subgradient_method}

    Let $\mathcal{H}^{\prime}$ be an inner product space,
    $\ell \colon \mathcal{H}^{\prime} \to \mathbb{R}$ be a convex funciton,
    and
    $\mathcal{C} \subset \mathcal{H}^{\prime}$ be a closed convex set. 
    
    Then, we say \textbf{projected subgradient method} to minimize $\ell$ on $\mathcal{C}$
    is the method to calculate a sequence $\{ \phi_k^{\mathrm{best}} \}_k \subset \mathcal{H}^{\prime}$
    such that for any positive integer $K$,
    \[
        \phi_K^{\mathrm{best}} \in \argmin_{\phi_k \in \{ \phi_k\}_{k=1}^K } \ell (\phi_k)   
        ,
        \quad \phi_{k+1} = \Proj_{\mathcal{C}} ( \phi_k - \alpha_k g_k )
        ,
    \] 
    where 
    $g_k$ is the subgradient\footnote{
        An element $g_k \in \mathcal{H}^{\prime}$ is a subgradient of $\ell$ at $\phi_k$ if and only if
            for arbitrary $\phi \in \mathcal{H}^{\prime} $,
            \[
                \ell (\phi ) \geq \ell ( \phi_k) + (g_k , \phi - \phi_k)
                .    
            \]
        } of $\ell$ at $\phi_k$
    ,
    the sequence $\{ \alpha_k \} \subset \mathbb{R}_{>0}$ be a learning rate
    and $\Proj_{\mathcal{C}}$ it the projection onto $\mathcal{C}$.
\end{definition}
Under the appropriate conditions, the projected subgradient method falls within an error $\varepsilon > 0$ of the minimum of the function $\ell$ in a finite number of iterations.
\begin{proposition}\label{prop:projected_subgradient_method}
    {
        \rm (c.f.
        \cite{Boyd-2003}*{\S 3})
    }
    \begin{description}
        \item[$(1)$] 
            Let $\mathcal{H}^{\prime}$ be an inner product space,
            $\ell \colon \mathcal{H}^{\prime} \to \mathbb{R}$ be the convex function which satisfies
            the Lipschitz condition,
            which means there exists $G > 0 $ such that
            for $\phi , \phi^\prime \in \mathcal{H}^{\prime}$
            \[
                | \ell (\phi ) - \ell (\phi^{\prime}) |
                \leq G \| \phi - \phi^{\prime} \|     
            \]
            and
            $\mathcal{C} \subset \mathcal{H}^{\prime}$ be the closed convex set.
            Let $\{ \alpha_k \}_k \subset \mathbb{R}_{>0}$ be a learning rate.
            We assume that there exists $\phi^* \in \argmin_{\phi \in \mathcal{C}} \ell (\phi)$.

            Then,
            the sequence $\{ \phi_{k}^{\mathrm{best}} \}_k $ which is calculated by the projected subgradient method to minimize $\ell$ on $\mathcal{C}$ satisfies
            \[
                \ell (\phi_k^{\mathrm{best}})
                - \ell (\phi^* )
                \leq 
                \frac{
                    d \left( \phi_1 , \argmin_{\phi \in \mathcal{C}} \ell (\phi) \right)^2
                    +
                    G^2 \sum_{i=1}^k \alpha_i^2
                }{
                    2 \sum_{i=1}^k \alpha_i
                }
                ,    
            \]
            where $d$ is the metric induced by the inner product of $\mathcal{H}^{\prime}$.
        \item[$(2)$] 
            We use the same notation $\mathcal{H}^{\prime}, \ell , \mathcal{C}$ as $(1)$.
            Let $\{ \alpha_k \}_k$ be a nonsummable
            diminishing learning rate, that is,
            \[
                \lim_{k \to \infty } \alpha_k = 0 , 
                \quad
                \sum_{k=1}^{\infty} \alpha_k = \infty
                .    
            \]
            Let $\varepsilon >0$.
            Let $K_1$ be the integer such that 
            for all $k > K_1 $, $\alpha_k \leq \varepsilon / G^2$,
            $K_2$ be the integer such that 
            for all $k > K_2$, 
            \[
                \sum_{i=1}^k \alpha_i
                >
                \frac{1}{\varepsilon}
                \left(
                    d \left( \phi_1 , \argmin_{\phi \in \mathcal{C}} \ell (\phi) \right)^2
                    +
                    G^2
                    \sum_{i=1}^{K_1} \alpha_i^2
                \right)   
            \]
            and 
            $K = \max \{ K_1 , K_2 \}$.
            Then, for $k>K$, we have 
            \[
                \ell (\phi_k^{\mathrm{best}})
                - \ell (\phi^* )
                < \varepsilon
                .
            \]
    \end{description}

\end{proposition}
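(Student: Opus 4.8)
The plan is to prove $(1)$ by the standard energy/telescoping argument for the projected subgradient method and then deduce $(2)$ by elementary estimates on the learning rate. For $(1)$, fix any minimizer $\phi^* \in \argmin_{\phi \in \mathcal{C}} \ell(\phi)$. The central step is to track the squared distance $\|\phi_k - \phi^*\|^2$ across one iteration. Since $\phi^* \in \mathcal{C}$ and the projection $\Proj_{\mathcal{C}}$ is nonexpansive (it does not increase distances to points of $\mathcal{C}$, and $\Proj_{\mathcal{C}}(\phi^*) = \phi^*$), one has
\[
    \|\phi_{k+1} - \phi^*\|^2 = \|\Proj_{\mathcal{C}}(\phi_k - \alpha_k g_k) - \phi^*\|^2 \leq \|\phi_k - \alpha_k g_k - \phi^*\|^2 .
\]
Expanding the right-hand side and applying the subgradient inequality $(g_k, \phi^* - \phi_k) \leq \ell(\phi^*) - \ell(\phi_k)$ together with the bound $\|g_k\| \leq G$, which follows from the $G$-Lipschitz property of $\ell$, I would obtain the one-step recursion
\[
    \|\phi_{k+1} - \phi^*\|^2 \leq \|\phi_k - \phi^*\|^2 - 2\alpha_k \big(\ell(\phi_k) - \ell(\phi^*)\big) + \alpha_k^2 G^2 .
\]

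Next I would telescope this inequality from $i = 1$ to $k$, discard the nonnegative term $\|\phi_{k+1}-\phi^*\|^2$, and use that $\ell(\phi_k^{\mathrm{best}}) \leq \ell(\phi_i)$ for every $i \leq k$ to replace each $\ell(\phi_i)$ by $\ell(\phi_k^{\mathrm{best}})$. This yields
\[
    2 \Big(\sum_{i=1}^k \alpha_i\Big)\big(\ell(\phi_k^{\mathrm{best}}) - \ell(\phi^*)\big) \leq \|\phi_1 - \phi^*\|^2 + G^2 \sum_{i=1}^k \alpha_i^2 ,
\]
and dividing gives the claimed bound with $\|\phi_1 - \phi^*\|^2$ in the numerator. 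Since this holds for every minimizer $\phi^*$ while the left-hand side is independent of the choice of $\phi^*$ (the minimum value $\ell(\phi^*)$ being common to all of them), I would take the infimum over $\phi^* \in \argmin_{\phi \in \mathcal{C}} \ell(\phi)$ on the right; because $d\big(\phi_1, \argmin_{\phi\in\mathcal{C}}\ell(\phi)\big)^2 = \inf_{\phi^*} \|\phi_1 - \phi^*\|^2$, this replaces $\|\phi_1-\phi^*\|^2$ by $d\big(\phi_1, \argmin_{\phi\in\mathcal{C}}\ell(\phi)\big)^2$ and proves $(1)$.

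For $(2)$, I would substitute the nonsummable diminishing hypotheses into the bound of $(1)$. Writing $R^2 := d\big(\phi_1, \argmin_{\phi\in\mathcal{C}}\ell(\phi)\big)^2$ and splitting $G^2\sum_{i=1}^k \alpha_i^2 = G^2\sum_{i=1}^{K_1}\alpha_i^2 + G^2\sum_{i=K_1+1}^k\alpha_i^2$, the tail is controlled using $\alpha_i \leq \varepsilon/G^2$ for $i > K_1$, so that $G^2\alpha_i^2 \leq \varepsilon\alpha_i$ and hence $G^2\sum_{i=K_1+1}^k\alpha_i^2 \leq \varepsilon\sum_{i=1}^k \alpha_i$. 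The $K_2$ condition gives $R^2 + G^2\sum_{i=1}^{K_1}\alpha_i^2 < \varepsilon \sum_{i=1}^k \alpha_i$ for $k > K_2$. Adding these two estimates bounds the numerator by $2\varepsilon \sum_{i=1}^k\alpha_i$, which cancels against the factor $2\sum_{i=1}^k \alpha_i$ in the denominator to give exactly $\ell(\phi_k^{\mathrm{best}}) - \ell(\phi^*) < \varepsilon$ for $k > K = \max\{K_1, K_2\}$.

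I expect the only delicate points to be the two facts invoked in the one-step recursion: the nonexpansiveness of $\Proj_{\mathcal{C}}$ onto a closed convex set (which should be recorded, and which presupposes that the projection is well defined in the inner product space $\mathcal{H}^{\prime}$), and the inequality $\|g_k\| \leq G$ relating the Lipschitz constant of $\ell$ to the norm of its subgradients. Everything else is telescoping bookkeeping and elementary step-size arithmetic, so the main conceptual obstacle is simply setting up the distance recursion correctly; the passage to $d\big(\phi_1, \argmin_{\phi\in\mathcal{C}}\ell(\phi)\big)$ by infimizing over minimizers is the one step that goes slightly beyond a verbatim reproduction of the single-minimizer argument.
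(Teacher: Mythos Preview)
Your argument is correct and is precisely the standard telescoping proof from Boyd's subgradient notes that the paper cites; the paper itself does not supply a proof of this proposition but only refers to \cite{Boyd-2003}*{\S 3} and remarks that the $\mathbb{R}^d$ argument carries over to a general inner product space. Your one-step recursion via nonexpansiveness of $\Proj_{\mathcal{C}}$, the subgradient inequality, the Lipschitz bound $\|g_k\|\leq G$, the telescoping, and the step-size arithmetic for part~$(2)$ are exactly that argument, and your infimum over minimizers to pass from $\|\phi_1-\phi^*\|^2$ to $d\big(\phi_1,\argmin_{\phi\in\mathcal{C}}\ell(\phi)\big)^2$ is the natural refinement.
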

\cref{prop:projected_subgradient_method} (2) means that
for any error $\varepsilon >0$,
the sequence $\{ \phi_k^{\mathrm{best}} \}$ which is calculated by the projected subgradient method to minimize $\ell $ on $\mathcal{C}$
falls within the error $\varepsilon > 0$ of the minimum of $\ell$ in a finite number of iterations.

\begin{remark}
    In \cite{Boyd-2003}*{\S 3}, they showed \cref{prop:projected_subgradient_method} on $\mathbb{R}^d$.
    Generally, by replacing the argument in \cite{Boyd-2003}*{\S 3} with an inner product space $\mathcal{H}$ instead of $\mathbb{R}^d$,
    we can show 
    \cref{prop:projected_subgradient_method}.
\end{remark}

We give the following inverse problem of optimization problem that is equivalent to the problem handled by intention learning of WIRL.
\begin{definition}\label{defi:intention-learning-IOP}

    Let $\mathcal{H} , \mathcal{H}_{\mathcal{S}}, \mathcal{H}^{\prime} $ be inner product spaces,
    $\mathcal{S} \subset \mathcal{H}_{\mathcal{S}}$ be a space of state,
    $\mathcal{A}^{\prime} \subset \mathcal{H}^{\prime}$,
    $\Phi \subset \mathcal{H}$ be a closed convex set, 
    $h \colon \mathcal{A}^{\prime} \to \mathcal{H} $ be the continuous function,
    $X(s) \subset \mathcal{A}^{\prime}$ be a compact non-empty set for $s \in \mathcal{S}$.

    Then, the inverse problem of multi-objective optimization problem (IMOOP)
    for the solver $a (\phi , s )$ and
    trajectories of an expert $\{ \tau_E^{(n)} = ( s^{(n)}, a^{(n)}) \}_n \subset \mathcal{H}_{\mathcal{S}}\times \mathcal{H}$
    is the problem to find $\phi \in \Phi $ satisfying
    \begin{equation}
        \text{ minimize } 
        \frac{1}{N} \sum_{n=1}^N \phi^{\intercal} a (\phi , s^{(n)} )
            - \frac{1}{N} \sum_{n=1}^N \phi^{\intercal} a^{(n)}
        ,
        \quad
        \text{ subject to }
        \phi \in \Phi
        .
        \label{eq:intention-learning-IOP}
    \end{equation}
\end{definition}

\begin{remark}
    We get the idea of the formulation \cref{defi:intention-learning-IOP} from the formulation of maximal entropy IRL \cite{Ho-Ermon-16-GAIL}. 
    In other words, $\frac{1}{N} \sum_{n=1}^N \phi^{\intercal} a (\phi , s^{(n)} )$ is a reward funciton in reinforcement learning.
\end{remark}

\begin{proposition}\label{prop:IOP_IL_WIRL}
    In the setting of 
    $\Theta = \Phi$,
    \cref{eq:intention-learning-IOP} is
    the replacement of $\max_{\phi \in \Phi}$ and $\inf_{\theta \in \Theta}$
    in \cref{eq:WIRL-origin-3}, that is,
    \begin{align}
        & 
        \min_{\phi \in \Phi}
        \left\{
            \frac{1}{N} \sum_{n=1}^N \phi^{\intercal} a (\phi , s^{(n)} )
            - \frac{1}{N} \sum_{n=1}^N \phi^{\intercal} a^{(n)}
        \right\}
        \notag \\
        & =
        \min_{ \theta \in \Phi }
        \max_{\phi \in \Phi} 
        \left\{
            \frac{1}{N} \sum_{n=1}^N \theta^{\intercal} a (\phi ,s^{(n)})
            -
            \frac{1}{N} \sum_{n=1}^N \theta^{\intercal} a^{(n)}
        \right\}
        .
        \label{eq:IOP_IL_WIRL}
    \end{align}
\end{proposition}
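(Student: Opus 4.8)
The plan is to compute the inner maximization $\max_{\phi \in \Phi}$ on the right-hand side of \cref{eq:IOP_IL_WIRL} explicitly for each fixed $\theta$, and to show it is attained at the admissible point $\phi = \theta$. First I would unwind \cref{eq:WIRL-origin-3} in the intention-learning setting of \cref{sec:intention_learning_over_inner_product_space}: since $f = \Proj_{\mathcal{A}}$ and $g_{\phi}(s) = (s, a(\phi, s))$, the feature vectors reduce to $f_{g_{\phi}(s^{(n)})} = a(\phi, s^{(n)})$ and $f_{\tau_E^{(n)}} = a^{(n)}$, so that the bracketed expression inside \cref{eq:WIRL-origin-3} coincides termwise with the one in \cref{eq:IOP_IL_WIRL}. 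This is a direct substitution and requires no estimate.

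The key step is the following consequence of the solver definition \cref{eq:optimal_solver}. For any $\phi \in \Phi$ we may write $a(\phi, s^{(n)}) = h(x_{\phi})$ for some $x_{\phi} \in X(s^{(n)})$; since $a(\theta, s^{(n)})$ maximizes $\theta^{\intercal} h(x)$ over $h(x) \in h(X(s^{(n)}))$, this yields $\theta^{\intercal} a(\phi, s^{(n)}) \leq \theta^{\intercal} a(\theta, s^{(n)})$ for every index $n$ and every $\phi \in \Phi$. Averaging over $n$ gives $\frac{1}{N}\sum_{n=1}^N \theta^{\intercal} a(\phi, s^{(n)}) \leq \frac{1}{N}\sum_{n=1}^N \theta^{\intercal} a(\theta, s^{(n)})$, and because $\Theta = \Phi$ the choice $\phi = \theta$ is admissible and realizes this bound. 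The subtracted term $-\frac{1}{N}\sum_{n=1}^N \theta^{\intercal} a^{(n)}$ does not depend on $\phi$ and simply carries through, so the inner maximum equals $\frac{1}{N}\sum_{n=1}^N \theta^{\intercal} a(\theta, s^{(n)}) - \frac{1}{N}\sum_{n=1}^N \theta^{\intercal} a^{(n)}$.

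Substituting this value back into the right-hand side of \cref{eq:IOP_IL_WIRL} collapses the $\min_{\theta}\max_{\phi}$ into $\min_{\theta \in \Phi}\{\frac{1}{N}\sum_{n=1}^N \theta^{\intercal} a(\theta, s^{(n)}) - \frac{1}{N}\sum_{n=1}^N \theta^{\intercal} a^{(n)}\}$, which is exactly the left-hand side once the dummy variable $\theta$ is relabeled as $\phi$. I expect the only genuinely delicate point to be the claim that a single $\phi = \theta$ maximizes each summand $\theta^{\intercal} a(\phi, s^{(n)})$ simultaneously, so that the maximum of the average does not exceed the average of the per-term maxima but is in fact realized by one common maximizer. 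This is precisely what the $\argmax$ characterization of the solver delivers, and it is what lets us avoid any genuine minimax interchange: no Sion-type saddle-point argument is needed, only the pointwise optimality of $a(\theta, \cdot)$.
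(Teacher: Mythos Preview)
Your proposal is correct and follows essentially the same route as the paper: both arguments fix the outer variable, use the optimality of the solver \cref{eq:optimal_solver} to show that the inner maximum is attained at the diagonal $\phi=\theta$, and then relabel. Your treatment is in fact slightly more careful than the paper's, since you explicitly flag that the common maximizer $\phi=\theta$ is what lets the maximum of the average equal the average of the per-term maxima; the paper suppresses this point.
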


\begin{proof}
    
    By the definition of the solver $a (\phi ,s^{(n)})$, \cref{eq:optimal_solver},
    for any $\theta \in \Phi$
    \[
        \phi^{\intercal} a (\phi ,s^{(n)})
        \geq 
        \phi^{\intercal} a (\theta ,s^{(n)})
        .
    \]
    Since
    \[
        \phi^{\intercal} a (\phi ,s^{(n)})
        \leq 
        \max_{\theta \in \Phi}
        \phi^{\intercal} a (\theta ,s^{(n)})
        ,
    \]
    we see
    \[
        \phi^{\intercal} a (\phi ,s^{(n)})
        =
        \max_{\theta \in \Phi}
        \phi^{\intercal} a (\theta ,s^{(n)})
        .
    \]
    Therefore, we obtain
    \begin{align*}
        &
        \min_{ \theta \in \Phi }
        \max_{\phi \in \Phi} 
        \left\{
            \frac{1}{N} \sum_{n=1}^N \theta^{\intercal} a (\phi ,s^{(n)})
            -
            \frac{1}{N} \sum_{n=1}^N \theta^{\intercal} a^{(n)}
        \right\}
        \\
        & =
        \min_{ \theta \in \Phi } 
        \left\{
            \max_{\phi \in \Phi}
            \frac{1}{N} \sum_{n=1}^N \theta^{\intercal} a (\phi ,s^{(n)})
            -
            \frac{1}{N} \sum_{n=1}^N \theta^{\intercal} a^{(n)}
        \right\}
        \\
        &
        =
        \min_{\phi \in \Phi}
        \left\{
            \frac{1}{N} \sum_{n=1}^N \phi^{\intercal} a (\phi , s^{(n)} )
            - \frac{1}{N} \sum_{n=1}^N \phi^{\intercal} a^{(n)}
        \right\}
        .
    \end{align*}
    It follows
    \cref{eq:IOP_IL_WIRL}.
\end{proof}

\begin{remark}
    It is not guaranteed that the replacement of $\max_{\phi \in \Phi}$ and $\inf_{\theta \in \Theta}$ in \cref{eq:WIRL-origin-3} coincides with \cref{eq:WIRL-origin-3}.
\end{remark}

By \cref{prop:IOP_IL_WIRL}, we can interpret Intention learning (of WIRL) as solving the IMOOP.
We can also show that intention learning converges with \cref{prop:projected_subgradient_method}.
We explain these.

We set
\begin{equation*}
    F ( \phi ) 
    :=
        \frac{1}{N} \sum_{n=1}^N \phi^{\intercal} a (\phi , s^{(n)} )
        - \frac{1}{N} \sum_{n=1}^N \phi^{\intercal} a^{(n)}
    .
\end{equation*}
To adapt \cref{prop:projected_subgradient_method} to $\Phi$ and $F$,
we show the following lemma:
\begin{lemma}\label{lem:WIRL-projected_subgradient_method}

    In the setting of \cref{defi:intention-learning-IOP},
    \begin{enumerate}
        \item[$(1)$] the function $F$ is convex, 
        \item[$(2)$] the fuction $F$ is Lipschitz,
        \item[$(3)$] one of the subgradient of $F$ at $\phi \in \Phi$ is
        $
            \frac{1}{N} \sum_{n=1}^N a (\phi , s^{(n)} )
            - \frac{1}{N} \sum_{n=1}^N a^{(n)}
            .
        $
    \end{enumerate}
\end{lemma}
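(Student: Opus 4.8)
The plan is to observe that the first sum in $F$ is a sum of support functions, which delivers all three claims with very little extra work. For each $n$ set $Y_n := h(X(s^{(n)}))$ and define
\[
    M_n(\phi) := \max_{h(x) \in h(X(s^{(n)}))} \phi^{\intercal} h(x) = \sup_{y \in Y_n} \phi^{\intercal} y .
\]
By the definition of the solver in \cref{eq:optimal_solver}, $a(\phi , s^{(n)})$ is a maximizer of $\phi^{\intercal} y$ over $y \in Y_n$, so $a(\phi , s^{(n)}) \in Y_n$ and $M_n(\phi) = \phi^{\intercal} a(\phi , s^{(n)})$. Hence
\[
    F(\phi) = \frac{1}{N} \sum_{n=1}^N M_n(\phi) - \frac{1}{N} \sum_{n=1}^N \phi^{\intercal} a^{(n)} ,
\]
in which the second term is linear in $\phi$ and each $M_n$ is the support function of $Y_n$. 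Everything then reduces to standard properties of support functions.

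For $(1)$, each $M_n$ is a pointwise supremum of the linear (hence convex) functions $\phi \mapsto \phi^{\intercal} y$ indexed by $y \in Y_n$, so each $M_n$ is convex. A finite sum of convex functions together with a linear function is convex, giving convexity of $F$. For $(3)$, I would show directly that $a(\phi , s^{(n)})$ lies in the subdifferential of $M_n$ at $\phi$: since $a(\phi , s^{(n)}) \in Y_n$, for every $\psi$ we have
\[
    M_n(\psi) \geq \psi^{\intercal} a(\phi , s^{(n)}) = \phi^{\intercal} a(\phi , s^{(n)}) + \left( a(\phi , s^{(n)}), \psi - \phi \right) = M_n(\phi) + \left( a(\phi , s^{(n)}), \psi - \phi \right),
\]
which is exactly the subgradient inequality. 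The gradient of the linear term is $-\frac{1}{N} \sum_n a^{(n)}$, so the sum rule for subgradients shows that $\frac{1}{N} \sum_n a(\phi , s^{(n)}) - \frac{1}{N} \sum_n a^{(n)}$ is a subgradient of $F$ at $\phi$.

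For $(2)$, the one place where the compactness hypothesis is genuinely needed is the uniform bound on the sets $Y_n$. Since each $X(s^{(n)})$ is compact and $h$ is continuous, each $Y_n = h(X(s^{(n)}))$ is compact, hence bounded; as $n$ ranges over the finite index set $\{1, \dots, N\}$, I may set $R := \max_{1 \leq n \leq N} \sup_{y \in Y_n} \| y \| < \infty$. Then each $M_n$ is $R$-Lipschitz by the usual estimate: choosing a maximizer $y^{*}$ for $\phi$ gives $M_n(\phi) - M_n(\phi^{\prime}) \leq (\phi - \phi^{\prime})^{\intercal} y^{*} \leq R \| \phi - \phi^{\prime} \|$, and symmetrizing in $\phi, \phi^{\prime}$ bounds the absolute difference. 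Adding the Lipschitz constant $\frac{1}{N} \| \sum_n a^{(n)} \|$ of the linear term yields that $F$ is Lipschitz. I expect this uniform boundedness to be the only substantive step; once $F$ is rewritten through support functions, convexity and the subgradient formula are immediate, so the main obstacle is simply extracting the constant $R$ from compactness and continuity.
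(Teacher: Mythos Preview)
Your proposal is correct and follows essentially the same approach as the paper: both recognize that $\phi \mapsto \phi^{\intercal} a(\phi,s^{(n)}) = \max_{x\in X(s^{(n)})}\phi^{\intercal} h(x)$ is a pointwise supremum of linear maps, deduce convexity from this, obtain Lipschitzness from the bound $\sup_{x\in X(s^{(n)})}\|h(x)\|<\infty$ via Cauchy--Schwarz, and verify the subgradient inequality directly from the optimality of $a(\phi,s^{(n)})$. The only difference is packaging---you invoke the phrase ``support function'' and the subdifferential sum rule, whereas the paper spells out each inequality by hand---but the underlying arguments are identical.
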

\begin{proof}
    We note that by \cref{eq:optimal_solver}, for $n$,
    \[
        \phi^{\intercal} a (\phi , s^{(n)} )
    = 
        \max_{x \in X(s^{(n)})} \phi^{\intercal} h (x )
        .
    \]
    $(1)$ 
    Since $X(s^{(n)})$ is compact and
    the map $h \colon \mathcal{A}^{\prime} \to \mathcal{H}$ and 
    for $\phi \in \Phi$ the map $\phi^{\intercal} \colon \mathcal{H} \to \mathbb{R} ; \mathcal{H} \ni y \mapsto \phi^{\intercal} y $
    are continuous,
    we note that
    there exists the maximal of
    $ \phi^{\intercal} h( \bullet ) $ on $X(s^{(n)})$.
    Let $\phi_1 , \phi_2 \in \Phi$.
    For any $x \in X(s^{(n)})$, we see
    \[
        t \phi_1^{\intercal} h(x ) + (1-t) \phi_2^{\intercal} h(x) 
        \geq (t \phi_1 + (1-t ) \phi_2)^{\intercal} h(x)    
        .
    \]
    By applying $\max_{x \in X(s^{(n)})}$ to the first and second terms on the left-hand side,
    for any $x \in X(s^{(n)})$, we obtain
    \[
        t \max_{x \in X(s^{(n)})} \phi_1^{\intercal} h(x ) 
        + (1-t) \max_{x \in X(s^{(n)})} \phi_2^{\intercal} h(x) 
        \geq (t \phi_1 + (1-t ) \phi_2)^{\intercal} h(x)   
        . 
    \]
    By applying $\max_{x \in X(s^{(n)})}$ to the right-hand side,
    \[
        t \max_{x \in X(s^{(n)})} \phi_1^{\intercal} h(x ) 
        + (1-t) \max_{x \in X(s^{(n)})} \phi_2^{\intercal} h(x) 
        \geq \max_{x \in X(s^{(n)})} (t \phi_1 + (1-t ) \phi_2)^{\intercal} h(x)    
        .
    \]
    Therefore, $\Phi \ni \phi \mapsto \max_{x \in X(s^{(n)})} \phi^{\intercal} h(x ) = \phi^{\intercal} a (\phi , s^{(n)} )$ is convex function.
    Since the sum of convex functions is convex,
    $F ( \phi ) $ is convex.

    $(2)$
    For any $x \in X(s^{(n)})$, by Cauchy-Schwarz' inequality,
    \begin{equation}
        \phi_1^{\intercal} h ( x ) - \phi_2^{\intercal} h( x) \leq \| \phi_1 - \phi_2 \|  \| h( x ) \|
        .
        \label{eq:Lip-CS}
    \end{equation}
    Since $X(s^{(n)})$ is compact and
    the map $h \colon \mathcal{A}^{\prime} \to \mathcal{H}$ and 
    the map $\| \bullet \| \colon \mathcal{H} \to \mathbb{R} ; \mathcal{H} \ni y \mapsto \| y \|$
    are continuous,
    we note that
    there exists the maximal of
    $ \| h( \bullet ) \| $ on $X(s^{(n)})$.
    To adapt $\max_{x \in X(s^{(n)}) }$ the right-hand side of \cref{eq:Lip-CS},
    for $x \in X(s^{(n)})$,
    \[
        \phi_1^{\intercal} h ( x ) - \phi_2^{\intercal} h( x) \leq \| \phi_1 - \phi_2 \| \max_{x^{\prime} \in X(s^{(n)}) } \| h( x^{\prime} ) \|
        .
    \]
    To adapt $\max_{x \in X(s^{(n)}) }$ to the second term of the left-hand side,
    for $x \in X(s^{(n)})$,
    \[
        \phi_1^{\intercal} h ( x ) - \max_{x^{\prime} \in X(s^{(n)}) } \phi_2^{\intercal} h( x^{\prime} ) \leq \| \phi_1 - \phi_2 \| \max_{x^{\prime} \in X(s^{(n)}) } \| h( x^{\prime} ) \|
        .
    \]
    To adapt $\max_{x \in X(s^{(n)}) }$ to the first term of the left-hand side,
    \[
        \max_{x \in X(s^{(n)}) } \phi_1^{\intercal} h ( x ) 
        - \max_{x \in X(s^{(n)}) } \phi_2^{\intercal} h( x ) 
        \leq \| \phi_1 - \phi_2 \| \max_{x \in X(s^{(n)}) } \| h( x ) \|
        .
    \]
    Since the above inequation also holds if we swap $\phi_1, \phi_2$, 
    \[
        \left| 
            \max_{x\in X(s^{(n)}) } \phi_1^{\intercal} h ( x ) 
            - \max_{x\in X(s^{(n)}) } \phi_2^{\intercal} h( x ) 
        \right|
        \leq 
            \| \phi_1 - \phi_2 \| \max_{x \in X(s^{(n)}) } \| h( x ) \|
        .
    \]
    It meas that $\phi \mapsto \max_{x \in X(s^{(n)})} \phi^{\intercal} h(x ) = \phi^{\intercal} a (\phi , s^{(n)} )$ is Lipschitz continuous.
    Since the sum of Lipschitz functions is Lipschitz,
    $F ( \phi ) $ is Lipschitz.

    $(3)$ For any $\phi_1, \phi_2 \in \Phi$,
    we have 
    \begin{equation*}
        \phi_2^{\intercal} \left( a(\phi_1 , s^{(n)}) - a^{(n)} \right)
        = \phi_1^{\intercal} \left( a(\phi_1 , s^{(n)}) - a^{(n)} \right)
        + (\phi_2 - \phi_1 )^{\intercal} \left( a(\phi_1,  s^{(n)}) - a^{(n)} \right)   
        .
    \end{equation*}
    By the definition of the solver $a(\phi_1 , s^{(n)})$,
    \[
        \phi_2^{\intercal} \left( a(\phi_2 , s^{(n)}) - a^{(n)} \right)
        \geq 
        \phi_2^{\intercal} \left( a(\phi_1 , s^{(n)}) - a^{(n)} \right)   
        . 
    \]
    Therefore, we see
    \begin{equation*}
        \phi_2^{\intercal} \left( a(\phi_2 , s^{(n)}) - a^{(n)} \right)
        \geq 
        \phi_1^{\intercal} \left( a(\phi_1 , s^{(n)}) - a^{(n)} \right)
        + (\phi_2 - \phi_1 )^{\intercal} \left( a(\phi_1,  s^{(n)}) - a^{(n)} \right)  
        . 
    \end{equation*}
    Taking the average of both sides for $n$,
    \[
        F ( \phi_2 )
        \geq
        F ( \phi_1 )
        +
        (\phi_2 - \phi_1 )^{\intercal} 
        \left( 
            \frac{1}{N} \sum_{n=1}^N 
                a(\phi_1,  s^{(n)}) 
            - 
            \frac{1}{N} \sum_{n=1}^N
                a^{(n)} 
        \right)  
        . 
    \]
\end{proof}

\begin{remark}
    Barmann et al. showed \cref{lem:WIRL-projected_subgradient_method} (1) and (3) for linear programming \cite{Barmann-2018-online}*{Proposition 3.1}
\end{remark}

The algorithm of WIRL for multi-objective optimization
is given by \cref{alg:intention-WIRL-gradual-decay}.
\begin{figure}[ht]
    \begin{algorithm}[H]
        \caption{Intention learning (of WIRL)}\label{alg:intention-WIRL-gradual-decay}
        \begin{algorithmic}[1] 
            \STATE initialize $ \phi_1 \in \Phi$
            \FOR{$k =1 , \ldots , K-1$}
                \STATE $\phi_{k+1} \leftarrow \phi_k 
                - \frac{\alpha_k}{N} \sum_{n=1}^N 
                \left(
                    a ( \phi_k , s^{(n)} )
                    -
                    a^{(n)}
                \right)
                $
                \STATE projection onto $\Phi$ for $\phi_{k+1}$
            \ENDFOR
            \RETURN $ \phi_K^{\mathrm{best}} \in \argmin_{\phi_k \in \{ \phi_k \}_{k=1}^K} F(\phi_k)$
        \end{algorithmic}
    \end{algorithm}
\end{figure}
\begin{remark}
    In
    \cite{Eto-2022},
    no operation ``projection onto $\Phi$ for $\phi_k$'' is performed
    on \cref{alg:intention-WIRL-gradual-decay}.
    We add this operation on \cref{alg:intention-WIRL-gradual-decay}
    to discuss the intention learning 
    with the projection onto $\Phi$
    in
    \cref{sec:proof_of_convergence_theorem}.
\end{remark}

\begin{lemma}\label{lem:IOP_coninsides_with_Intension_L}
    In the setting of \cref{defi:intention-learning-IOP},
    the algorithm which solves IMOOP for the solver $a (\phi , s )$
    coninsides with \cref{alg:intention-WIRL-gradual-decay}.
\end{lemma}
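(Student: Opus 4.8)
The plan is to read off the statement as an identification of two descriptions of the same procedure. The phrase ``the algorithm which solves IMOOP for the solver $a(\phi,s)$'' means: instantiate the projected subgradient method of \cref{defi:projected_subgradient_method} with objective $\ell = F$ and feasible set $\mathcal{C} = \Phi$, since by \cref{defi:intention-learning-IOP} the IMOOP is exactly the problem of minimizing $F$ over $\Phi$. So I would first check that this instantiation is legitimate, i.e.\ that the hypotheses of \cref{defi:projected_subgradient_method} hold. This is immediate from the assumptions already in place: $\Phi$ is a closed convex set by the setting of \cref{defi:intention-learning-IOP}, and $F$ is convex by \cref{lem:WIRL-projected_subgradient_method}(1). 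Thus the projected subgradient method is well-defined for $(F,\Phi)$.

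Next I would write out one step of the projected subgradient method explicitly. By \cref{defi:projected_subgradient_method} the update is $\phi_{k+1} = \Proj_{\Phi}(\phi_k - \alpha_k g_k)$, where $g_k$ is any subgradient of $F$ at $\phi_k$. The key point is that I am free to choose the particular subgradient supplied by \cref{lem:WIRL-projected_subgradient_method}(3), namely
\[
    g_k = \frac{1}{N} \sum_{n=1}^N a(\phi_k , s^{(n)})
        - \frac{1}{N} \sum_{n=1}^N a^{(n)} .
\]
Substituting this choice gives
\[
    \phi_{k+1}
    = \Proj_{\Phi}\!\left( \phi_k - \frac{\alpha_k}{N} \sum_{n=1}^N \left( a(\phi_k , s^{(n)}) - a^{(n)} \right) \right) ,
\]
which is exactly what lines~3 and~4 of \cref{alg:intention-WIRL-gradual-decay} compute once the gradient step (line~3) and the subsequent projection onto $\Phi$ (line~4) are composed into a single operation.

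Finally I would match the remaining ingredients: the initialization $\phi_1 \in \Phi$ and the returned value $\phi_K^{\mathrm{best}} \in \argmin_{\phi_k \in \{\phi_k\}_{k=1}^K} F(\phi_k)$ in \cref{alg:intention-WIRL-gradual-decay} coincide verbatim with the initialization and the output rule $\phi_K^{\mathrm{best}} \in \argmin_{\phi_k} \ell(\phi_k)$ prescribed in \cref{defi:projected_subgradient_method} with $\ell = F$. Since initialization, iterative update, and output all agree, the two algorithms coincide, which is the claim. There is no serious obstacle here; the proof is essentially an unfolding of definitions. The only point requiring a word of care is that the projected subgradient method admits \emph{any} valid subgradient, so I must observe that the specific increment hard-coded in \cref{alg:intention-WIRL-gradual-decay} is a legitimate subgradient of $F$ — and this is precisely the content of \cref{lem:WIRL-projected_subgradient_method}(3).
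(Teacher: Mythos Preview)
Your proposal is correct and follows exactly the same approach as the paper: the paper's proof simply invokes \cref{defi:projected_subgradient_method} and \cref{lem:WIRL-projected_subgradient_method}(3) to conclude that the projected subgradient method for $F$ on $\Phi$ coincides with \cref{alg:intention-WIRL-gradual-decay}. Your version merely spells this out in more detail (explicitly matching the update, initialization, and output rule), so there is nothing substantively different to discuss.
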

\begin{proof}
    By \cref{defi:projected_subgradient_method} and \cref{lem:WIRL-projected_subgradient_method} (3),
    the projected subgradient method to minimize $F$ on $\Phi$
    coincides with
    \cref{alg:intention-WIRL-gradual-decay}.
\end{proof}

Since \cref{prop:projected_subgradient_method},
\cref{lem:WIRL-projected_subgradient_method,lem:IOP_coninsides_with_Intension_L},
intention learning of WIRL, \cref{alg:intention-WIRL-gradual-decay} is convergent.
\begin{theorem}\label{theo:WIRL-convergence-theorem}

    Let $\mathcal{H} , \mathcal{H}_{\mathcal{S}}, \mathcal{H}^{\prime} $ be inner product spaces,
    $\mathcal{S} \subset \mathcal{H}_{\mathcal{S}}$, 
    $\mathcal{A}^{\prime} \subset \mathcal{H}^{\prime}$, 
    $\Phi \subset \mathcal{H}$ be a closed convex set,
    $h \colon \mathcal{A}^{\prime} \to \mathcal{H} $ be a continuous function,
    $X(s) \subset \mathcal{A}^{\prime}$ be the compact non-empty set for $s \in \mathcal{S}$.
    Let $\{ \alpha_k \}_k \subset \mathbb{R}_{>0}$ be a nonsummable diminishing learning rate,
    that is,
        \[
            \lim_{k \to \infty } \alpha_k = 0 , 
            \quad
            \sum_{k=1}^{\infty} \alpha_k = \infty    
            .
        \]
    Assume that there exists the minimum of $F$ on $\Phi$.

    Then, for any $\varepsilon > 0$, the sequence $\{ \phi_k^{\mathrm{best}} \}_k$ which is calculated by intention learning of WIRL for any error $\varepsilon >0 $,
    there exists a positive integer $K$ such that 
    for all $k > K$,
    \[
        F(\phi_k^{\mathrm{best}}) - \min_{\phi \in \Phi } F (\phi ) < \varepsilon
        .    
    \]

\end{theorem}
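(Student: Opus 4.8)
The plan is to recognize this as a direct application of the projected subgradient convergence result, \cref{prop:projected_subgradient_method} (2), after checking that the objective $F$ and the feasible set $\Phi$ meet its hypotheses. All the analytic work has already been isolated in \cref{lem:WIRL-projected_subgradient_method} and \cref{lem:IOP_coninsides_with_Intension_L}, so the theorem should follow by assembling these pieces; I would not expect to introduce any new estimate.

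First I would fix the data needed to invoke \cref{prop:projected_subgradient_method} (2): take the inner product space to be $\mathcal{H}$, the convex function to be $F$, and the closed convex set to be $\Phi$, which is closed and convex by the hypothesis of \cref{defi:intention-learning-IOP}. By \cref{lem:WIRL-projected_subgradient_method} (1) the function $F$ is convex, and by \cref{lem:WIRL-projected_subgradient_method} (2) it is Lipschitz, so there is a constant $G > 0$ with $|F(\phi) - F(\phi')| \leq G \| \phi - \phi' \|$; this is exactly the Lipschitz condition demanded by the proposition. The learning rate $\{ \alpha_k \}_k$ is nonsummable and diminishing by assumption, and the standing assumption that $F$ attains its minimum on $\Phi$ furnishes a point $\phi^* \in \argmin_{\phi \in \Phi} F(\phi)$, so that $F(\phi^*) = \min_{\phi \in \Phi} F(\phi)$.

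Next I would identify the iterates. By \cref{lem:IOP_coninsides_with_Intension_L} the sequence produced by intention learning of WIRL, \cref{alg:intention-WIRL-gradual-decay}, coincides with the sequence generated by the projected subgradient method applied to minimize $F$ on $\Phi$, where the subgradient used at each step is the one exhibited in \cref{lem:WIRL-projected_subgradient_method} (3). Hence the sequence $\{ \phi_k^{\mathrm{best}} \}_k$ in the statement is precisely the sequence to which \cref{prop:projected_subgradient_method} (2) applies. Invoking that proposition with the constant $G$ above then yields, for any $\varepsilon > 0$, an integer $K = \max\{ K_1 , K_2 \}$ such that $F(\phi_k^{\mathrm{best}}) - F(\phi^*) < \varepsilon$ for all $k > K$, which is the claimed conclusion.

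There is no genuinely hard step here; the content lives in the already-established lemmas. The one point that warrants care is matching the ambient space: \cref{prop:projected_subgradient_method} is stated for a general inner product space, and by the remark following it the bound holds on any $\mathcal{H}$ rather than only $\mathbb{R}^d$, so I would make explicit that $F$ is regarded as a function on all of $\mathcal{H}$. Its defining formula makes sense for every $\phi \in \mathcal{H}$, because the solver $a(\phi , s^{(n)})$ is defined by maximizing $\phi^{\intercal} h(x)$ over the compact set $h(X(s^{(n)}))$ for any $\phi$; consequently the convexity, the Lipschitz bound, and the subgradient inequality of \cref{lem:WIRL-projected_subgradient_method} are available globally, and the hypotheses of the proposition are satisfied verbatim.
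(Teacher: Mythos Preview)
Your proposal is correct and follows essentially the same route as the paper: the paper's proof is a one-sentence citation of \cref{prop:projected_subgradient_method}, \cref{lem:WIRL-projected_subgradient_method}, and \cref{lem:IOP_coninsides_with_Intension_L}, and you have simply spelled out how these three results combine. Your added remark about regarding $F$ as defined on all of $\mathcal{H}$ is a reasonable clarification but not something the paper makes explicit.
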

It means that for any error $\varepsilon >0$,
intention learning of WIRL falls within the error $\varepsilon$
of the minimum of $F$ in a finite number of iterations.

\begin{remark}\label{remark:intention_for_constant_step}
    By \cref{prop:projected_subgradient_method} (1)
    if the learning rate $\{ \alpha_k \}_k \subset \mathbb{R}_{>0}$ is constant $\alpha$,
    \cref{alg:intention-WIRL-gradual-decay}
    falls within an error
    \[
        \frac{\alpha}{2N} \sum_{n=1}^N \max_{x^{\prime} \in X(s^{(n)}) } 
        \| 
            h( x^{\prime} ) -a^{(n)}
        \| 
    \]
    of
    the minimum of $F$ on $\Phi$ in a finite number of iterations.
    However,
    it is not guaranteed
    that for any $\varepsilon > 0$
    \cref{alg:intention-WIRL-gradual-decay}
    falls within the error $\varepsilon $ of
    the minimum of $F$ on $\Phi$ in a finite number of iterations.
\end{remark}

\begin{remark}
    \cite{Suzuki-2019-TV}*{Algorithm 2} coincides with \cref{alg:intention-WIRL-gradual-decay} for $0$-$1$ planning problem, $\Phi = \mathbb{R}^n$ and a constant learning rate. However, by \cref{remark:intention_for_constant_step}, it is not guaranteed that for any $\varepsilon > 0$ \cite{Suzuki-2019-TV}*{Algorithm 2} falls within the error $\varepsilon $ of the minimum of $F$ in a finite number of iterations. If we take a learning rate satisfying nonsummable diminishing, it is guaranteed that for any $\varepsilon > 0$ \cite{Suzuki-2019-TV}*{Algorithm 2} falls within the error $\varepsilon $ of the minimum.
\end{remark}

\section{Examples}\label{sec:example_of_intention_learning}

\begin{example}[mixed integer linear programming]\label{example:mip}
    We set $\mathcal{A} = \mathbb{R}^d$,
    \[
        \Phi = \Delta^d ,
        \quad
        \Delta^d := \left\{ \phi = (\phi_1 , \ldots , \phi_d ) \in \mathbb{R}^d \, \middle| \, \forall i , \, \phi_i \geq 0 , \,  \sum_{i=1}^d \phi_i = 1 \right\}, 
    \]
    and 
    the vector valued function $h = \Id \colon \mathbb{R}^d \to \mathbb{R}^d $.
    We assume that $X (s)$ is the finite direct sum of bounded convex polyhedrons
    for any $s \in \mathcal{S}$.
    In the above setting,
    we can do intention learning of WIRL.

    The action, i.e. solver is given by 
    \[
        a (\phi , s ) \in \argmax_{x \in X (s)} \phi^{\intercal} x 
        .
    \]
    It means that $a(\phi , s )$ is a solver of a mixed integer linear programming.

    When we do 
    \cref{alg:intention-WIRL-gradual-decay},
    we use \cite{Wang-13-projection}
    as the algorithm to implement 
    $\Proj_{\Delta^d}$.
    Since $\Phi$ is compact, there exists the minimum of $F$ on $\Phi$, and we can adapt \cref{theo:WIRL-convergence-theorem} to this case.

\end{example}

\begin{remark}\label{remark:mip}
    In \cref{example:mip},
    the space on we research is
    \[
        \tilde{\Phi} = \left\{ \phi = (\phi_1 , \ldots , \phi_d ) \in \mathbb{R}^d \, \middle| \, \forall i , \, \phi_i \geq 0  \right\} \setminus \{ (0 , \ldots , 0 ) \}   .
    \]
    However, it is more suitable to research 
    $
        \Phi = \Delta^d
    $
    than $\tilde{\Phi}$.
    There are two reason.

    First,
    there are cases where there is no minimum on $\tilde{\Phi}$.
    Since 
    for arbitrary $\gamma >0 $, $\phi \in \tilde{\Phi}$,
    \begin{equation}\label{eq:mip-remark}
        F (\gamma \phi ) = \gamma F (\phi )  
    \end{equation}
    and
    \[
        F( 0 , \ldots , 0  ) = 0    , 
    \]
    if $F(\phi ) > 0$ for any $\phi \in \tilde{\Phi}$,
    there is no minimum on $\tilde{\Phi}$.
    In addition, 
    If there exists $\phi \in \tilde{\Phi}$ such that $F(\phi ) < 0$,
    by \cref{eq:mip-remark}
    to take a large enough $\gamma$,
    we can move $F( \gamma \phi ) $ close to $-\infty$.
    It means there is no minimum on $\tilde{\Phi}$ if there exists $\phi \in \tilde{\Phi}$ such that $F(\phi ) < 0$.

    Second,
    it is enough to research $\Phi$.
    The solver $a(\phi , s^{(n)})$ is invariant under scalar multiplication
    and $\tilde{\Phi} = \{ c \phi \,| \,  c \in \mathbb{R}_{>0} , \, \phi \in \Delta^d \}$.
    Therefore it is enough to research $\Delta^d$, the set of equivalence classes.

\end{remark}

\begin{example}[mixed integer quadratic programming]\label{exa:QP-1}
    We set 
    the space $\mathcal{A} = \mathbb{S}^d \times \mathbb{R}^{d}$,
    and
    $
        \Phi = \Gamma^d \times \square^{d} (b_0) 
        ,
    $
    where $\mathbb{S}^d$ is the space of symmetric matrixes of order $d$,
    $\mathbb{S}_+^d$ is the space of positive semidefinite matrixes of order $d$,
    and 
    \begin{align*}
        \Gamma^d :=
        \left\{
            A \in \mathbb{S}_+^d
        \, \middle| \,
            \Tr (A ) = 1
        \right\}    
        ,
        \quad
        \square^d (b_0) :=
        \prod_{i=1}^d \{ b_i \in \mathbb{R} \, | \, - b_0 \leq b_i \leq b_0 \} 
        \text{ for } b_0 >0.
    \end{align*}
    For any square matrix $A$ of order $d$,
    $b \in \mathbb{R}^{d}$
    we set $\phi = (A, b)$,
    and 
    set the vector-valued function $h \colon \mathbb{R}^d \to \mathbb{S}^d \times \mathbb{R}^{d} $ by  
    for $x \in \mathbb{R}^{d}$,
    \[
        \phi^{\intercal} h (x)
        := - x^{\intercal} A x - b^{\intercal} x  
        . 
    \]
    We assume that $X (s)$ is the finite direct sum of bounded convex polyhedrons
    for any $s \in \mathcal{S}$.
    In the above setting,
    since $\Phi$ is convex,
    we can do intention learning of WIRL.

    The action, i.e. solver is given by
    \[
        a (\phi , s ) \in \argmax_{h(x) \in h( X (s))}  ( - x^{\intercal} A x - b^{\intercal} x ) 
        .
    \]
    It means $a(\phi , s )$ is a solver of 
    concave mixed integer quadratic programming.

    When we do 
    \cref{alg:intention-WIRL-gradual-decay},
    we use
    \[
        \Proj_{\Phi} (A,b)
        = 
        \left( 
            \Proj_{\Gamma^d} (A)
            , \Proj_{\square^d (b_0)} (b)
        \right)
    \]
    as the algorithm to implement 
    $\Proj_{\Phi}$.
    Here $\Proj_{\Gamma^d} (A)$ is given by 
    $\Proj_{\Gamma^d} (A) = \sum_{i=1}^d \mu_i v_i v_i^{\intercal}$
    where $A = \sum_{i=1}^d \lambda_i v_i v_i^{\intercal}$ is the eigenvalue decomposition of $A$,
    $\lambda = (\lambda_1 , \ldots , \lambda_d)$,
    and $\mu = (\mu_1 , \ldots, \mu_d ) = \Proj_{\Delta^d} (\lambda )$
    \cite{Beck-2017-First}*{Example 7.20},
    and $\Proj_{\square^d (b_0)} (b)$ is given by 
    \begin{align*}
        \Proj_{\square^d (b_0)} (b) & = (\Proj_{\square^1 (b_0)} (b_1) , \ldots , \Proj_{\square^1 (b_0)} (b_d) ),
        \\
        \Proj_{\square^1 (b_0)} (b_i) & = \max \{ \min \{ b_i , b_0 \} , - b_0 \} ,
        \,
        i=1 , \ldots , d 
        .
    \end{align*}

    In the same way as \cref{remark:mip},
    $\Phi$ does not include the pair of the zero matrix and the zero vector.
    It means that
    \cref{alg:intention-WIRL-gradual-decay} is not convergence at this pair.
    Since $\Phi$ is compact, there exists the minimum of $F$ on $\Phi$, and we can adapt \cref{theo:WIRL-convergence-theorem} to this case.
    
\end{example}

\section{Related work}\label{sec:related_work}

\subsection*{Convergence of Inverse reinforcement learning}

We believe that various inverse reinforcement learning convergence properties can be guaranteed with the gradient descent (e.g. \cite{Garrigos-2023-handbook}*{Theorem 3.4}) for $L$-smooth fucntion including smooth funcitons and projected subgradient method \cref{prop:projected_subgradient_method} for Lipschitz functions. 
Specifically, we give the following examples.

In MEIRL \cite{Ziebart-2008-maximum},
the objective function $L$ is convex.
In addition, if trajectories is finite, $L$ is smoooth.
We assume that there exists the maximum of $L$ on the space of weight of reward (here denote $\Theta$).
For example, if $\Theta$ is compact, $L$ has mamimum on $\Theta$.
By gradient descent, that is, \cite{Garrigos-2023-handbook}*{Theorem 3.4}, 
we can approximate the maximum reward weight $\theta$ in $L$.
This means that the convergence of MEIRL is guaranteed.

In relative entropy inverse reinforcement learning (REIRL),
the dual objective fucntion $g$ is concave \cite{Boularias-2011-relative}.
In addition, if trajectories is finite, $g$ is Lipschitz.
We assume that there exists the maximum of $g$ on the space of weight of reward (here denote $\Theta$).
By projected subgradient method, that is, \cref{prop:projected_subgradient_method}, 
we can approximate the maximum reward weight $\theta$ in $g$.
However, since we are approximating the gradient of $g$ with respect to $\theta$ using weighted sampling, there is room to consider whether convergence of REIRL will actually occur.

In GCL \cite{Finn-2016-guided},
the importance sampling of the objective fucntion ${\mathcal{L}}_{\mathrm{IOC}}$ (here denote $L$) is convex for $\theta$.
In addition, if trajectories is finite, $L$ is Lipschitz.
We assume that there exists the maximum of $L$ on the space of weight of reward (here denote $\Theta$).
By gradient descent, that is, \cite{Garrigos-2023-handbook}*{Theorem 3.4}, 
we can approximate the maximum reward weight $\theta$ in $L$.
This means that the convergence of GCL is guaranteed.

\subsection*{WGAN and WIRL}
We explain why WGAN does not converge, while WIRL does.
In WGAN, in general, parameterizing the class of 1-Lipschitz functions does not necessarily make the Critic convex.
In WIRL, on the other hand, parameterizing the class of 1-Lipschitz functions with \cref{eq:sim_1-Lip} and \cref{prop:IOP_IL_WIRL} make the Critic, i.e. the contents of $\sup$ in \cref{eq:WIRL-origin-2} convex.
In other words, 
\cref{lem:WIRL-projected_subgradient_method} (2) follows.

\section{Conclusion}
We proved the convergence of WIRL for multi-objective optimizations with the projective subgradient method by formulating an IMOOP that is equivalent to WIRL for multi-objective optimizations.
In other words, 
we showed WIRL is convergent at the minimum of $F$ on $\Phi$.

We raise some future works.
We have proved that WIRL converges, whereas it has not been proven whether learner's trajectories mimic expert's trajectories when WIRL is convergent.
If we show this, we can say that WIRL theoretically converges in the direction that learner's trajectories imitate expert's trajectories.
This means that WIRL is theoretically guaranteed to have a mechanism that frees users from manually designing the reward function.

\begin{ack}

The authors would like to thank Kei Takemura for his valuable comments.
They
would also like to thank Shinji Ito for carefully reading this paper.

\end{ack}

\bibliography{WIRL} 


\end{document}